\newaliascnt{lemma}{theorem}  
\newtheorem{lemma}[lemma]{Lemma}  
\theoremstyle{definition}
\newtheorem{definition}{Definition}[section]
\newcommand{\X}{X}
\newcommand{\E}{{\rm I\kern-.3em E}}
\newcommand{\reals}{\mathbb{R}}
\newcommand{\precision}{P}
\newcommand{\recall}{R}
\newcommand{\lowerbound}[1]{{#1}_{l}}
\newcommand{\upperbound}[1]{{#1}^{u}}
\newcommand{\fpu}{\upperbound{fp}}
\newcommand{\tpl}{\lowerbound{tp}}
\newcommand{\id}{\mathds{1}}
\newcommand{\aucpr}{\textbf{AUCPR}\xspace}
\newcommand{\aucroc}{\textbf{AUCROC}\xspace}
\newcommand{\patr}{\textbf{P@R}\xspace}
\newcommand{\ratp}{\textbf{R@P}\xspace}
\newcommand{\patrb}[1]{\textbf{P@R}{#1}}
\newcommand{\ratpa}[1]{\textbf{R@P}{#1}}
\newcommand{\map}{\textbf{MAP}\xspace}
\newcommand{\fb}{\mathbf{F_{\beta}}}
\newcommand{\posloss}{\mathscr{L}^{+}} 
\newcommand{\negloss}{\mathscr{L}^{-}} 
\renewcommand{\th}{b}
\newcommand\ignore[1]{}
\newcommand\enote[1]{{\textcolor{blue}{\textbf{Elad}: #1}}}
\newcommand\mnote[1]{{\textcolor{magenta}{\textbf{Mariano}: #1}}}
\newcommand\gnote[1]{{\textcolor{red}{\textbf{Gal}: #1}}}
\newcommand\anote[1]{{\textcolor{pink}{\textbf{Alan}: #1}}}
\renewcommand\enote[1]{{}}
\renewcommand\gnote[1]{{}}
\renewcommand\anote[1]{{}}
\begin{document}

%

%

\twocolumn[

\aistatstitle{Scalable Learning of Non-Decomposable Objectives}

\runningauthor{Elad Eban, Mariano Schain, Alan Mackey, Ariel Gordon, Rif A. Saurous, Gal Elidan}
\aistatsauthor{Elad Eban \\ \texttt{elade@google.com}
    \And Mariano Schain \\ \texttt{marianos@google.com}
    \And Alan Mackey \\ \texttt{mackeya@google.com} 
    \AND Ariel Gordon \\ \texttt{gariel@google.com} 
    \And Rif A. Saurous \\ \texttt{rif@google.com} 
    \And Gal Elidan \\ \texttt{elidan@google.com} 
    }

\aistatsaddress{\\ Google, Inc.}]

\begin{abstract}
Modern retrieval systems are often driven by an underlying machine learning model. The goal of such systems is to identify and possibly rank the few most relevant items for a given query or context. Thus, such systems are typically evaluated using a ranking-based performance metric such as the area under the precision-recall curve, the $F_\beta$ score, precision at fixed recall, etc. Obviously, it is desirable to train such systems to optimize the metric of interest.

In practice, due to the scalability limitations of existing approaches for optimizing such objectives, large-scale retrieval systems are instead trained to maximize classification accuracy, in the hope that performance as measured via the true objective will also be favorable. In this work we present a unified framework that, using straightforward building block bounds, allows for highly scalable optimization of a wide range of ranking-based objectives. We demonstrate the advantage of our approach on several real-life retrieval problems that are significantly larger than those considered in the literature, while achieving substantial improvement in performance over the accuracy-objective baseline.

\end{abstract}

\section{Introduction}
\label{sec:intro}

Machine learning models underlie most modern automated retrieval systems. The quality of such systems is evaluated using ranking-based measures such as area under the ROC curve (\aucroc) or, as is more appropriate in the common scenario of few relevant items, measures such as area under the precision recall curve (\aucpr, also known as average precision), mean average precision (\map), precision at a fixed recall rate (\patr), etc. In fraud detection, for example, we would like to constrain the fraction of customers that are falsely identified as fraudsters, while maximizing the recall of true ones.

What is common to all of the above objectives is that, unlike standard classification loss, they only partially or do not at all decompose over examples. This makes optimization more difficult and consequently machine learning retrieval systems are often not trained to optimize the objective of
interest. Instead, they are typically trained simply to maximize classification accuracy in the hope that the retrieval performance will also be favorable. Unfortunately, this discrepancy can lead to inferior results, as is illustrated in \autoref{fig:example} (see also, for example, \cite{cortes2004auc,davis2006relationship,yue2007support}). Our goal in this work is to develop a unified approach that is applicable to a wide range of rank-based objectives and that is scalable to the largest of datasets, i.e. that is as scalable as methods that optimize for classification accuracy. 

\begin{figure}[!bh]
\centerline{
\includegraphics[width=0.9\columnwidth]{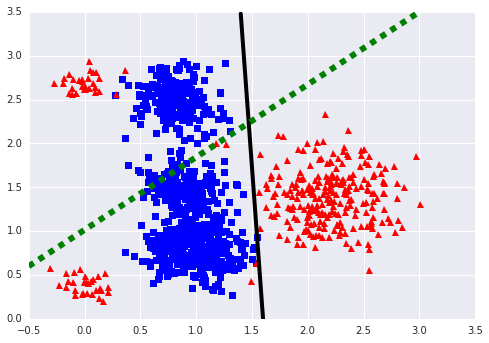}
}
\caption{Illustration of the potential difference between classification accuracy and, for example, the $\max\ \patr=0.95$ objective. The red triangles and blue squares represent positive and negative examples respectively. The black solid line corresponds to the best linear classifier which is ~90\% accurate. If the threshold of this classifier is changed to achieve recall of 95\%, the precision will be ~50\%. The dashed green line corresponds to a classifier that achieves a recall of 95\% but with precision of about 65\%.}
\label{fig:example}
\end{figure}

Several recent works, starting with the seminal work of Joachims \cite{joachims2005support} have addressed the challenge of optimizing various rank-based objectives. All of these, however, still suffer from computational scalability issues, or are limited to a specific metric. Joachims \cite{joachims2005support}, for example, offers a method for optimizing $\fb$ and \patr. In general, the computation of each gradient step is quadratic in the number of training instances, and slow even in the best of cases. As another example, Yue et al. \cite{yue2007support} optimize for \map but are hindered by the use of a costly cutting plane training algorithm. See \autoref{sec:related} for a more detailed list of related works and discussion of the scalability limitations.

In this work, we propose an alternative formulation that is based on simple direct bounds on per-example quantities indicating whether each example is a true positive or a false positive. These building block bounds allow us to construct global bounds on a wide range of ranking based, non-decomposable objectives, including all of those mentioned above. Importantly, the surrogate objectives we derive can be optimized using standard stochastic (mini-batch) gradient methods for saddle-point problems with favorable convergence rates \cite{chen2014optimal}. This is a decisive advantage at the massive scale on which most modern automated retrieval systems must operate, where methods requiring full-batch optimization are intractable. 

Following the development of the bounds for a range of measures, we demonstrate the effectiveness of our approach for optimizing \aucpr and other objectives on several real-life problems that are substantially larger than those considered in the literature. Empirically, we observe both improvement in performance compared to the use of standard loss functions (log-loss), and a favorable convergence rate indistinguishable from the vanilla SGD baseline rates.

Our contribution is thus threefold. First, we provide a unified approach that, using the same building blocks, allows for the optimization of a wide range of rank-based objectives that include \aucroc, \aucpr, \patr, \ratp, and $\fb$.  Third, our unified framework also easily allows for novel objectives such as the area under the curve for a region of interest, i.e. when the precision or recall are in some desired range. Finally, and most importantly, our bounds give rise to an optimization approach for non-decomposable learning metrics that is highly scalable and that is applicable to truly large datasets.

\section{Related Works}
\label{sec:related}

Several works in the last decade have focused on developing methods
for optimizing rank-based objectives. Below we outline those most relevant to our work while highlighting the inherent scalability limitation which is our central motivation.

The seminal paper of Joachims \cite{joachims2005support} uses a bound on the possible number of contingency tables to optimize $\fb$ and \patr, and a bound
based on individual pairs of examples to optimize \aucroc; in both cases the system iteratively solves a polynomial-time optimization sub-problem to find a constraint to add to a global optimization, which generalizes structured SVMs
\cite{tsochantaridis2005large}. The scalability of this approach is limited since the cost of computing a single gradient is generally quadratic (and always super-linear) in the number of training examples. Furthermore, even in the best case, Joachims' loss function and its gradient take at least linear time to compute, resulting in a slow gradient-descent algorithm. This is in contrast to our stochastic gradient approach.

Optimizing the \aucpr\ or the related mean-average-precision is, in principle, even more difficult since the objective does not decompose over pairs of examples. \cite{metzler2005markov} and \cite{caruana2006empirical}
tackle this objective directly, and \cite{yue2007support} proposes a
more efficient AP-SVM which relies on a hinge-loss relaxation of the
\map problem.
%
While the work of \cite{yue2007support} demonstrates the merit of optimizing \map instead of accuracy for reasonably sized domains, scalability is still hindered by the use of a cutting plane training algorithm that requires a costly identification of the most violated constraint. To overcome this, \cite{mohapatra2014efficient} suggests several innovative heuristic improvements that achieve appealing running time gains, but do not inherently solve the underlying scalability problem. \cite{song2015direct} generalizes the above approaches to the case of nonlinear deep network optimization; their approach ``has the same complexity'' as \cite{yue2007support}, and is thus also not scalable to very large problems.

Other approaches achieve scalability by considering a restricted class of models \cite{metzler2005direct} or targeting only specific objectives  \cite{chase2014thresholding, herschtal2004optimising,nan2012optimizing,  parambath2014optimizing,rakotomamonjy2004optimizing}. The work of \cite{quoc2007learning} achieves both scalability and generality, but does not cover objectives which place a constraint on the model, such as recall at a fixed precision, precision at a fixed recall, or accuracy at a quantile. \cite{boyd2012accuracy} optimize this latter objective in an elegant method that is theoretically scalable, since  it can be distributed to many machines. However, it requires solving an optimization problem per instance and is thus not truly scalable in practice. Finally, \cite{kar2014online} propose a general purpose approach for optimizing non-decomposable objectives. Their method is cast in the online setting, and the adaptation they suggest for stochastic gradient optimization with minibatches requires a buffer. For extremely multi-label problems such as one we consider below, maintaining such a buffer may not be possible.
\section{Building Block Bounds}
\label{sec:building}
In this section we briefly describe the simple building block bounds
of the true positive and false positive quantities. These statistics
will form the basis for the objectives of interest throughout our
work. 

We start by defining the basic entities involved in rank-based
metrics. We use $X$ to denote the explanatory features, $Y$ to
denote the target label, $Y^+$ to denote the positive examples, and 
$Y^-$ to denote the negatives.

\begin{definition}
A classification rule $f_{\th}$ is characterized by a score function 
$f: \X \rightarrow \reals$, and a threshold $\th \in \reals$,
indicating that classification is done according to $f(x) \ge \th$.
\end{definition}
Note that we intentionally separate the parameters of the models
embedded in $f$ (which could be a linear model or a deep neural-net), and the
decision threshold $\th$. The former provides a score which defines a
ranking over examples, while the latter defines a decision boundary on
the score that separates examples that are predicted to be relevant
(positive) from those that are not. 

\begin{definition} 
The precision $\precision(f_{\th})$ and recall $\recall(f_{\th})$ of a 
classification rule are defined by:
\begin{equation*}
\begin{split}
  \precision(f_{\th}) = &\frac{tp(f_{\th})}{tp(f_{\th}) +fp(f_{\th})} \\ 
  \recall(f_{\th}) = &\frac{tp(f_{\th})}{tp(f_{\th}) +fn(f_{\th})} = \frac{tp(f_{\th})} {|Y^+|}
\end{split}\end{equation*}
where $tp,fp,fn$ are the true-positives, false-positives, and 
false-negative counts (respectively):
\[
  tp(f_\th) =  \sum_{i \in Y^+} \id_{f(x_i) \ge \th} \qquad 
  fp(f_\th) =  \sum_{i \in Y^-} \id_{f(x_i) \ge \th }
\]
\end{definition}

We lower bound $tp$ and upper bound $fp$ by first writing them in terms of the zero-one loss:
\begin{equation}
\begin{split}
  tp(f_\th) =&  \sum_{i \in Y^+} 1-\ell_{01}(f_b,x_i,y_i) \\  
  fp(f_\th) =&  \sum_{i \in Y^-} \ell_{01}(f_b,x_i,y_i) 
\end{split}
\end{equation}
we do not need to
bound $fn$ because it shows up only in the denominator of the
expression for recall and can be eliminated via $|Y^+| = tp + fn$.
Now it is natural to bound these quantities by using a surrogate for the zero-one loss function such as the hinge loss:
\begin{equation}
\label{tp_tl_bounds}
\begin{split}
 \tpl(f_\th) \triangleq \;&   \sum_{i \in Y^+}1-\ell_h(f_{\th},x_i,y_i) \leq tp(f_\th), \\
 \fpu(f_\th) \triangleq \;& \sum_{i \in Y^-}\ell_h(f_{\th},x_i,y_i) \geq fp(f_\th) ,
\end{split}
\end{equation}
where
\begin{equation*}
\ell_h(f_{\th},x,y) \triangleq \max(0, 1 -y (f(x) - \th))
\end{equation*}
is the hinge loss of the score $f(x) - \th$ on point $x$ with label
$y\in \{-1,1\}$. The right-hand inequalities follow directly. 
We note that in what follows we use the hinge-loss for simplicity but other losses could be used in all the results presented below (with the exception of the linear-fractional transformation of the $F_\beta$ score in \autoref{sec:f1}). In the case of convex surrogates for the zero-one loss such as the log-loss or the smooth-hinge-loss \cite{rennie2005smooth} we get convex (and smooth) optimization problems. However, our method can also be used with non-convex surrogates such as the ramp-loss.

These simple bounds will allow us to bound a variety of global
non-decomposable ranking measures including the \aucroc, \aucpr,
$F_\beta$, etc.
%

\section{Maximizing Recall at Fixed Precision}\label{sec:recall@precistion}

In this section we show how the building block bounds of
\eqref{tp_tl_bounds} can provide a concave lower bound on the
objective of maximum recall with \emph{at least} $\alpha$ precision.
A similar derivation could also be used to provide a bound on maximum
precision given a minimum desired recall. Aside from the stand-alone 
usefulness of the \patr and \ratp metrics, the developments here will
underlie the construction for optimizing the maximum \aucpr\ objective
that we present in the next section.

We begin by defining the maximum recall at fixed minimum precision problem:
\begin{equation}\label{recall_at_precision}\begin{split}
     \ratp\alpha = \quad \max_{f} &\quad \recall(f) \\
     s.t. &\quad \precision(f) \ge \alpha
\end{split}\end{equation}

The above is a difficult combinatorial problem. Thus, instead of
solving it directly, we optimize a lower bound similarly to how the
hinge loss is used as a surrogate for accuracy in SVM optimization
\cite{cortes1995support}. To do so, we write
\eqref{recall_at_precision} as
\begin{equation*}
\begin{split}
     \max_{f, \th} &\quad \frac{1}{|Y^+|} tp(f) \\
     s.t. &\quad tp(f) \ge \alpha(tp(f) + fp(f)).
 \end{split}
\end{equation*}
To turn this objective into a tractable optimization surrogate, we use
\eqref{tp_tl_bounds} to lower bound $tp$ and upper bound $fp$:
\begin{equation}
\label{surrogate_recall_at_precision}
\begin{split}
     \overline{\ratp\alpha} = \quad \max_{f, \th} &\quad \frac{1}{|Y^+|}\tpl(f) \\
     s.t. &\quad   (1- \alpha)\tpl(f) \ge \alpha \fpu(f).
\end{split}
\end{equation}
\begin{lemma}
\label{lem:concavelower}
The relaxed problem $\overline{\ratp\alpha}$ is a concave lower bound 
for $\ratp\alpha$.
\end{lemma}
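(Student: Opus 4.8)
The statement bundles two claims that I would prove separately: (i) the program defining $\overline{\ratp\alpha}$ in \eqref{surrogate_recall_at_precision} is a concave maximization problem, and (ii) its optimal value never exceeds the true optimum $\ratp\alpha$ of \eqref{recall_at_precision}.

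For concavity, I would first record that when the score is affine in the optimization variables (the linear-model case; for a generic deep-network score the surrogate is nonconcave and the lemma should be read in the linear case), the hinge loss $\ell_h(f_{\th},x,y)=\max(0,1-y(f(x)-\th))$ is a pointwise maximum of affine functions of $(f,\th)$, hence convex. Consequently $\tpl(f)=\sum_{i\in Y^+}\bigl(1-\ell_h(f_{\th},x_i,y_i)\bigr)$ is a sum of concave functions and is concave, while $\fpu(f)=\sum_{i\in Y^-}\ell_h(f_{\th},x_i,y_i)$ is convex. The objective $\tfrac{1}{|Y^+|}\tpl(f)$ is therefore concave. For the feasible set, I would rewrite the constraint as $g(f,\th)\ge 0$ with $g(f,\th)\triangleq(1-\alpha)\tpl(f)-\alpha\fpu(f)$; since $\alpha\in[0,1]$, $g$ is a nonnegative combination of the concave functions $\tpl$ and $-\fpu$ and is thus concave, so its zero-superlevel set $\{g\ge 0\}$ is convex. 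Hence $\overline{\ratp\alpha}$ maximizes a concave objective over a convex set.

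For the lower bound, I would take an arbitrary classification rule $f_{\th}$ feasible for $\overline{\ratp\alpha}$ and show it is feasible for the original problem \eqref{recall_at_precision} with at least as large an objective there. Using the building-block inequalities $\tpl(f)\le tp(f)$ and $\fpu(f)\ge fp(f)$ from \eqref{tp_tl_bounds} together with $1-\alpha\ge 0$ and $\alpha\ge 0$, the surrogate constraint yields the chain $(1-\alpha)\,tp(f)\ge(1-\alpha)\tpl(f)\ge\alpha\fpu(f)\ge\alpha\,fp(f)$, which rearranges to $tp(f)\ge\alpha\bigl(tp(f)+fp(f)\bigr)$, i.e.\ $\precision(f)\ge\alpha$. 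Thus $f_{\th}$ is feasible for \eqref{recall_at_precision} and its value there is $\recall(f)=tp(f)/|Y^+|\ge\tpl(f)/|Y^+|$, the surrogate objective at that point. Taking the supremum over all surrogate-feasible $f_{\th}$ gives $\ratp\alpha\ge\overline{\ratp\alpha}$, with the inequality vacuous when the surrogate is infeasible.

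The argument is short, and the only delicate points are bookkeeping rather than substance, so I expect no serious obstacle — the main thing to be careful about is exactly what makes the chain of inequalities legitimate: one must restrict to $\alpha\in[0,1]$ so that $\alpha$ and $1-\alpha$ are both nonnegative, one must note that concavity needs the affine-score assumption, and one should dispatch the degenerate case $tp(f)+fp(f)=0$ in the definition of $\precision$ (there recall is $0$ and the surrogate constraint forces $\tpl(f)\le 0$, so the surrogate objective is nonpositive and the bound is immediate). If any step resists, it is stating the linear-model hypothesis cleanly, not the inequalities themselves.
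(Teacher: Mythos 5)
Your proof is correct and follows essentially the same route as the paper's: show the surrogate feasible set is contained in the true feasible set and the surrogate objective is dominated by the true recall, then observe concavity of the objective and convexity of the constraint. Your version is somewhat more careful than the paper's --- you manipulate the constraint inequality directly rather than passing through the surrogate precision ratio $\tpl/(\tpl+\fpu)$ (which can misbehave when $\tpl\le 0$), and you make explicit the affine-score assumption behind concavity and the requirement $\alpha\in[0,1]$ --- but these are refinements of the same argument, not a different one.
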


\begin{proof}
To see that the surrogate problem is a lower bound of the original problem  we notice that the surrogate recall $\frac{1}{|Y^+|} \tpl(f)$ is a lower bound on the true recall. In addition we notice that the surrogate precision  $\overline{\precision} = \frac{\tpl(f)}{\tpl(f) + \fpu(f)}$ is a lower bound on the actual precision. Hence the feasible set of $\overline{\ratp\alpha}$ is contained in the feasible set of $\ratp\alpha$,  as
\[        
\precision(f) \geq \overline{\precision}(f)   \geq \alpha. 
\]
This proves that the surrogate problem is a lower bound on the original problem.

Finally, the objective of $\overline{\ratp\alpha}$ is concave,
and the constraints are convex (in fact they are piece-wise linear).
\end{proof}

The relaxed objective of \eqref{surrogate_recall_at_precision} is now
amenable to efficient optimization. To see this, we plug the explicit
forms of $\tpl(f)$ and $\fpu(f)$ into
\eqref{surrogate_recall_at_precision}:

\begin{equation*}
\begin{split}
    \overline{\ratp\alpha} =  \max_{f} &\quad  1 - \frac{\posloss(f)}{|Y^+|} 
     \\
    s.t. &\; (1- \alpha)(|Y^+|-\posloss(f)) \ge 
        \alpha\negloss(f).
\end{split}
\end{equation*}
Where we use as a shorthand 
\[
\posloss(f) = \sum_{i\in Y^+}\ell_h(f,x_i,y_i) 
\] 
for the loss on the positive examples, and similarly $\negloss$ is the sum of errors on the negative examples. We omit the explicit dependence on $f$ when it is clear from context. 
Next, we rewrite the constraint and ignore the constant multiplier in the objective to
obtain the following equivalent (with respect to the optimal $f, \th$) problem:
\begin{equation}
\label{eq:minp@r}
\begin{split}
\min_{f} &\;  \posloss \\
  s.t. &\;  \alpha \negloss  
               +   (1-\alpha)\posloss \le   (1-\alpha)|Y^+|.  
\end{split}
\end{equation}
Now, applying Lagrange multiplier theory, we can equivalently consider the following objective:
\begin{equation*}
\begin{split}
\min_{f} \max_{\lambda \ge 0}  \quad  &
  \posloss   
  +\lambda \left( \frac{\alpha}{1-\alpha} \negloss
               + \posloss  - |Y^+| \right).
\end{split}
\end{equation*}
Finally, after some regrouping of terms, this can be written as:
\begin{equation}
\begin{split}
\label{eq:dual} 
\min_{f} \max_{\lambda \ge 0} &\quad  
 (1+\lambda)\posloss   
 + \lambda \frac{\alpha}{1-\alpha}  \negloss
 - \lambda|Y^+| . 
\end{split}
\end{equation}
We now face a saddle point problem, which we optimize using the following straightforward iterative stochastic
gradient descent (SGD) updates:
\begin{equation*}
\begin{split}
f^{(t+1)} = 
    & f^{(t)} - \gamma \nabla L(f^{(t)},\lambda^{(t)}) \\
\lambda^{(t+1)} = 
    & \lambda^{(t)} + \gamma \nabla L(f^{(t+1)},\lambda^{(t)})
\end{split}
\end{equation*}
where 
\[
L(f,\lambda) = (1+\lambda)\posloss(f)   
    + \lambda \frac{\alpha}{1-\alpha}  \negloss(f)
    - \lambda|Y^+|.
\]
\begin{lemma}
The above procedure converges to a fixed point if both $\posloss,\negloss$ are convex.
\end{lemma}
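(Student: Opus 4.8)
The plan is to recognize the saddle-point objective in \eqref{eq:dual} as convex--concave and then invoke the standard convergence theory for primal--dual (Arrow--Hurwicz / Uzawa) iterations. First I would verify the structural hypotheses on $L(f,\lambda) = (1+\lambda)\posloss(f) + \lambda\frac{\alpha}{1-\alpha}\negloss(f) - \lambda|Y^+|$. For any fixed $\lambda \ge 0$, the coefficients $1+\lambda$ and $\lambda\alpha/(1-\alpha)$ are nonnegative (here $0 < \alpha < 1$), so a nonnegative combination of the convex functions $\posloss$ and $\negloss$ is again convex; hence $f \mapsto L(f,\lambda)$ is convex. For any fixed $f$, $L(f,\cdot)$ is affine in $\lambda$, hence concave, and the dual feasible set $\{\lambda \ge 0\}$ is convex. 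Thus, assuming as throughout that $f$ ranges over a convex parameter family (e.g.\ linear models), \eqref{eq:dual} is a convex--concave saddle-point problem.

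Second, I would observe that the displayed updates are exactly the projected gradient descent--ascent iterations for this problem: a descent step in $f$ using $\nabla_f L$, followed by an ascent step in $\lambda$ using $\nabla_\lambda L$ evaluated at the already-updated primal iterate (a Gauss--Seidel / alternating variant), with the constraint $\lambda \ge 0$ maintained by the implicit projection $\lambda \leftarrow \max(0,\lambda)$ coming from the $\max_{\lambda\ge 0}$. Invoking the convergence analysis of such schemes for convex--concave saddle problems (e.g.\ \cite{chen2014optimal}), for a sufficiently small (or suitably diminishing) step size $\gamma$ the iterates $(f^{(t)},\lambda^{(t)})$ — or, in the merely-convex case, their running averages $(\bar f^{(t)},\bar\lambda^{(t)})$ — converge to a saddle point $(f^\star,\lambda^\star)$ of $L$.

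Third, I would close the loop by checking that the limit is a fixed point of the update map. A pair $(f,\lambda)$ is a fixed point precisely when $\nabla_f L(f,\lambda) = 0$ and $\lambda = \max(0,\,\lambda + \gamma\nabla_\lambda L(f,\lambda))$, which are exactly the first-order (KKT) stationarity conditions characterizing a saddle point of the convex--concave problem; hence the saddle point produced above is a fixed point of the procedure, completing the argument.

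The main obstacle is that vanilla gradient descent--ascent on a convex--concave but not strongly convex objective need not produce convergent \emph{last} iterates — the trajectory can orbit a saddle point rather than approach it. The honest resolutions are (i) to state the guarantee for the ergodic averages $\bar f^{(t)},\bar\lambda^{(t)}$, which do converge under a small or diminishing $\gamma$; or (ii) to exploit extra regularity — adding an $\ell_2$ regularizer makes $\posloss$ strongly convex in the parameters while the affine dependence on $\lambda$ keeps the dual side well behaved, yielding last-iterate convergence (at a linear rate). I would present the ergodic statement as the baseline and note the strongly-convex refinement as the case in which the iterates themselves converge.
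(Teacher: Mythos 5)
Your proposal is correct and follows essentially the same route as the paper, which gives no argument of its own but simply defers to the standard convergence theory for primal--dual subgradient iterations on convex--concave saddle-point problems (citing Section~3 of Ned\'ic and Ozdaglar); your verification that $L$ is convex in $f$ (as a nonnegative combination of the convex $\posloss,\negloss$) and affine in $\lambda$ is exactly the hypothesis that theory requires. Your closing caveat is also well placed: the cited result guarantees convergence only of the ergodic averages (to within an $O(\gamma)$ neighborhood for constant stepsize), so the lemma's claim of convergence ``to a fixed point'' for the raw iterates is, strictly speaking, stronger than what the reference delivers, and your averaged-iterate or strongly-convex reformulations are the honest ways to state it.
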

The proof is straightforward can be found in \cite[Section 3]{nedic2009subgradient}.

Aside from the obvious appeal of algorithmic simplicity, it is
interesting to note that the above objective supports the standard
practice of trying to achieve good \patr or \ratp via example
re-weighting. To see this, note that for a fixed $\lambda$, the
minimization over $f$ in \autoref{eq:dual} is just a $c(\alpha,
\lambda)$ weighted SVM. Specifically, after adding a regularization term, the SVM objective takes the form
\begin{equation}\begin{split}\label{eq:wsvm} 
\min_{f} &\quad  
  \sum_{i}\ell_h^c(f,x_i,y_i) + \|f\|^2,
\end{split}\end{equation}
where $\ell_h^c$ is the loss when a positive instance is weighted
by $c(\alpha, \lambda) = \frac{(1+\lambda)(1-\alpha)}{\lambda\alpha}$.
Because $c(\alpha, \lambda)$ is monotonic in $\lambda$, the
problem can also be solved via a binary search for this single dual parameter.

\subsection{Maximizing \patr}
Following the same steps we can reach the following optimization problem:
\begin{equation}\label{patr}
\overline{\patr}\beta = \min_f\max_{\lambda \geq 0} \frac{-|Y^+|\beta}{|Y^+|\beta + \fpu(f)} - \lambda \left(\frac{\tpl(f)}{|Y^+|} - \beta\right)
\end{equation}
which seems odd at first as we expect it also to result in a re-weighting of the positives and negatives (mediated by $\lambda$) as in  $\ratp$. However, an equivalent problem which minimizes $1/\precision$ rather than $-\precision$ achieves the expected formulation
\begin{equation}
\overline{\patr}\beta = \min_f\max_\lambda  \negloss 
+ \lambda\left(\beta +\frac{\posloss}{|Y^+|} -1 \right).
\end{equation}
We note that this problem has the same minimizer as \autoref{patr}, but not the same value.
\section{Maximizing \aucpr}
\label{sec:auc}
We are now ready to use our derivation of the R@P optimization objective
in the previous section order to construct a concave lower-bound surrogate
for \aucpr.  A similar derivation could be used for \aucroc optimization.

To start, recall that \aucpr is simply an integral over \ratp (equivalently \patr) values. That is:
\begin{equation}\label{eq:apdef}
\aucpr(f) = \max_f \int_{\pi}^1 \recall@\precision\alpha(f) d\alpha,
\end{equation}
where $\pi$ is the positive class prior, and $\recall@\precision\alpha(f)$ denotes
the recall we achieve when using $f$ as a score function with $b=b(\alpha)$ is a threshold which achieves precision $\alpha$. Another way to think of $b$ is via the optimization problem 
\begin{equation*}
\begin{split}
\recall@\precision\alpha(f) = & \max_{\th} \recall(f_{\th}) \\ &\text{s.t. } \precision(f_{\th}) = \alpha.
\end{split}
\end{equation*}

To apply our bounds to the objective of maximizing $\aucpr(f)$, we first approximate the integral in
\autoref{eq:apdef} by a discrete sum over a set of precision anchor values $A=\{\pi=\alpha_0 <
\alpha_1 < \alpha_2 < \ldots < \alpha_k\}$:
\begin{equation}
\begin{split}
\max_f & \; \aucpr(f) =  \\  \max_f \; & \sum_{t=1}^k \Delta_t \left[ \max_{\th_t} \recall(f_{\th_t}) \quad \text{s.t. } \precision(f_{\th_t}) \geq \alpha_t \right],
\end{split}
\end{equation}
where 
\begin{equation*}
 \Delta_t = \alpha_t-\alpha_{t-1} \qquad \forall t=1 \ldots k.
\end{equation*}
Naturally, one could take uniformly spaced $\alpha$ and set $\alpha_t = \pi + \tfrac{(1-\pi)t}{k}$, though this is not required.

Next, using the same technique we used for the maximum \ratp objective, we 
relax the building block statistics and, after some algebraic
manipulations and application of the Lagrange multiplier theory, we get:
\begin{equation}\begin{split}
\min_{f,\th_1,\ldots\th_k}  \max_{\lambda_1\ldots\lambda_k}
\sum_{t=1}^k \Delta_t \Big(&  (1+\lambda_t)\posloss(f,b_t)  
 \\ &
 + \lambda_t \frac{\alpha_t}{1-\alpha_t}  \negloss(f,b_t) 
 - \lambda_t|Y^+| \Big).
\end{split}\end{equation} 
As before, we can solve this saddle point problem by SGD \cite{nedic2009subgradient}.

By replacing \ratp with true positive rate at fixed false positive rate in the derivation above, we obtain a similar algorithm for optimizing \aucroc. The building block bounds from Section \ref{sec:building} can be used to generate surrogate objectives for true positive rate at the false positive rate anchor points. 

An important consequence of the above derivation is that we can just
as easily optimize for \aucroc and \aucpr in some limited range,
e.g. for precision greater than some desired threshold. This would
amount to constraining the range of precision anchor values in the
above development, and can be optimized just as easily.

\ignore{
  We now repeat the technique that we use in order to solve the max recall at precision $\alpha$ problem, which is basically the relaxation of the confusion matrix statistics followed by writing the Lagrangian of the problem
  \begin{equation}\begin{split}
      \max_{\lambda}\min_{f,\th}  \sum_{t=1}^k \Delta_t \sum_i \ell_h^{c_t}(f_{\th_t},x_i,y_i) 
    \end{split}\end{equation}

  \subsection{draft}
  define $\ell_h^{c_t}(f_{\th_t}) =\sum_i \ell_h^{c_t}(f_{\th_t},x_i,y_i) $
  At the optimum $\Delta_t=\precision(f_{\th_t})-\precision(f_{\th_{t-1}})$ (leap of faith) so we set the following objective and claim it's equivalent or somewhat related:
  \begin{equation}\begin{split}
      \min_{f,\th}  \sum_{t=1}^k  \left(\precision(f_{\th_t})-\precision(f_{\th_{t-1}}) \right)\ell_h^{c_t}(f_{\th_t}) \\
      \min_{f,\th}  \sum_{t=1}^k  \precision(f_{\th_t})\left( \ell_h^{c_t}(f_{\th_t}) - \ell_h^{c_{t+1}}(f_{\th_t}) \right)
    \end{split}\end{equation}

  \enote{till here}
  \subsection{old}
  in order to solve the \aucpr optimization.

  Idea: maximizing the $\aucpr$ is a joint maximization  $\recall@\precision\alpha$ for all values of $\alpha$. As seen in \autoref{sec:recall@precistion}, this maximization is tractable.

  where we denote by $f_\th$ the classification rule $f(x)$ with threshold $\th$. 
  Note that the threshold $\th$ achieving the inner maximization above is a function of the required precision $\alpha$. That is
  $\th = \th(\alpha)$ above.

  For simplicity, let us consider the discrete case where we have a finite number of $\alpha$-values of equally spaced: $\{\alpha_k\}$ values.
  \begin{equation}
    \overline{\aucpr} = \max_f \sum_{\alpha_k} \max_{\th} [\recall(f,\th) 
    \text{ s.t.} \precision(f,\th) \ge \alpha_k] 
  \end{equation}
  Denote by $\th_k$ the value of $\th$ achieving the inner maximization above. That is $\th_k = \th(\alpha_k)$.
  If we use the relaxed precision and recall we can plug in \autoref{eq:minp@r}, directly looking for the 
  set of (threshold) values $\{\th_k\}$ that when used in conjunction with a scoring function $f$, maximizes the surrogate AUC:
  \begin{equation}\begin{split}
      \overline{\overline{\aucpr}} = \min_{f,\th_1,\ldots,\th_k} &\quad \sum_k \sum_{y_i\in Y^+}\ell_h(f_{\th_k},x_i,y_i) \\
      s.t. &\forall_k\quad  
      \alpha_k \sum_{y_i\in Y^-}\ell_h(f_{\th_k},x_i,y_i) 
      +  (1-\alpha_k) \sum_{y_i\in Y^+} \ell_h(f_{\th_k},x_i,y_i) \le  (1-\alpha_k) |Y^+|  
    \end{split}\end{equation}

  The dual of the problem above is:
  \begin{equation}\begin{split}\label{eq:aucdual}
      \max_{\lambda_1,\ldots,\lambda_k \ge 0} \min_{f,\th_1,\ldots,\th_k}  \quad \sum_k
      & [ (\lambda_k + 1 -\lambda_k\alpha_k))\sum_{y_i\in Y^+}\ell_h(f_{\th_k},x_i,y_i)  \\
      & + \lambda_k \alpha_k  \sum_{y_i\in Y^-}\ell_h(f_{\th_k},x_i,y_i) \\
      &  - \lambda_k(1-\alpha_k)|Y^+| ]   
      ]
    \end{split}\end{equation}

  Finally, for each $k$ above, for a fixed $\lambda_k$, the first term to be maximized ($\lambda_k|Y^+|$) may be ignored 
  (when searching for the optimal $f, \th_1,\ldots,\th_k$). Therefore, for a fixed set of dual variables $\lambda_1,\ldots,\lambda_k$,
  there exists constants $\{c_k = c(\alpha_k,\lambda_k)\}$ for which the following problem has the same optimal set:
  \begin{equation}\begin{split}\label{eq:nonormaucsvm}
      \min_{f,\th_1,\ldots,\th_k}  &\quad \sum_k \E[\ell_h^{c_k}(f_{\th_k},X,Y)]
    \end{split}\end{equation}
  where $\E$ denotes the empirical average of the hinge loss $\ell_h$ over the training set. 

  We suggest to pick a set of values $c' = \{10^{-3},10^{-2},\ldots, 10^{3}\}$ and simply train with above objective. 
  The problem is that this sum is not a uniform approximation the precision dimension, that is, 
  we are missing a term of from the change of variables. We define the discrete counterpart:
  \[
    \Delta(\precision)_k = \precision(f_{\th_{k}}) - \precision(f_{\th_{k-1}})
  \]

  \mnote{The problem is actually that it is not a single set of possible values of $c$ we should guess, 
    rather a whole grid - that is, $k$ sets for which one of the values is the 'right' one for each $k$..}

  So our proposed method is to optimize:
  \begin{equation}\begin{split}\label{eq:aucsvm}
      \min_{f,\th_1,\ldots,\th_k}  &\quad \sum_k \Delta(\precision)_k \E[\ell_h^{c'_k}(f_{\th_k},X,Y)]
    \end{split}\end{equation}

  \begin{algorithm}[h]
    \caption{SGD on $\aucpr$}\label{alg:sgdauc}
    \begin{algorithmic}[]
      \Procedure{Maximize-AUC} {$\{(x_i,y_i)\}_1^M$,$\lambda$,$\bar c$}
      \\  $f: X \rightarrow R$
      \\  $\th =(\th_1,\ldots,\th_K)$ 
      \While{ not-converged }
      \State Set $i \sim [1,M]$, and  $k \sim [1,K]$
      \State $g = \nabla(\Delta(\precision)_k) \cdot \ell_h^{c_k}(f_{\th_k},x_i,y_i) +
      \Delta(\precision)_k \cdot \nabla (\ell_h^{c_k}(f_{\th_k},x_i,y_i))$
      \State update $f,\th$ with $g$.
      \EndWhile
      \EndProcedure
    \end{algorithmic}
  \end{algorithm} 
  In order to optimize this function we need to be able to take it's gradient. Computing the (stochastic sub-) gradient of the expectation is straightforward, and the code is in \autoref{alg:sgdauc}. However, notice that one needs to take the gradient of $\Delta(\precision)_k$ which is also not hard. We suggest a heuristic \enote{Can we say something about it?} for the optimization which only compute the gradients of the loss, and estimates the $\Delta(\precision)_k$ weights independently, see \autoref{alg:heuristicsgdauc}.

  \begin{algorithm}[h]
    \caption{Heuristic SGD on $\aucpr$}\label{alg:heuristicsgdauc}
    \begin{algorithmic}[]
      \Procedure{Semi-Blind-\aucpr-Optimize} {$\{(x_i,y_i)\}_1^M$,$\lambda$,$\bar c$}
      \\  $f: X \rightarrow R$
      \\  $\th =(\th_1,\ldots,\th_K)$ 
      \\  $\{tp_k,fp_k,fn_k\}_1^K$
      \While{ not-converged }
      \State Set $i \sim [1,M]$, and  $k \sim [1,K]$
      \State $g = (p_{k-1}-p_k)  \nabla (\ell_h^{c_k}(f_{\th_k},x_i,y_i))$
      \State Set $\hat{y} = f(x_i) +\th_k \ge 0$
      \State Update $tp_k,fp_k,fn_k$ given $y_i,\hat{y}$.
      \State Set $p_k = \frac{tp_k}{tp_k + fp_k}$
      \State Update $f,\th$ with $g$.
      \EndWhile
      \EndProcedure
    \end{algorithmic}
  \end{algorithm}
}

\section{Optimizing the $F_{\beta}$ Measure} 
\label{sec:f1}

To demonstrate the flexibility of our unifying framework, we now show
that the building block bounds of \eqref{tp_tl_bounds}, with a few
additional manipulations, can also be used to optimize the commonly
used $F_{\beta}$ score. This score
is a measure of the effectiveness of retrieval with respect to a user
who attaches $\beta$ times as much importance to recall as precision
\cite{van1979information}. The $F_{\beta}$ score is defined as:
\[
F_{\beta} =  (1+\beta^2)\frac{\precision \cdot \recall }{\beta^2 \precision + \recall}
\]

On the surface it is not clear how to use bounds on the $tp$ and $fp$ to bound $F_{\beta}$
since these statistics appear both in the denominator and numerator. We can do so by first
rewriting the $F_{\beta}$ score in the well known type I and type II error form:
\begin{equation*}\begin{split}
F_{\beta} = &(1+\beta^2) \frac{tp}{(1+\beta^2) tp + \beta^2 fn +fp} \\
= &(1+\beta^2)\frac{tp}{\beta^2|Y^+| +tp +fp}
\end{split}.
\end{equation*}

We now plug in the bounds from \autoref{tp_tl_bounds}, and get a surrogate function for $F_{\beta}$:
\begin{equation*}
 \overline{F_{\beta}} = (1+\beta^2)\frac{\tpl}{\beta^2|Y^+| +\tpl + \fpu} \leq F_{\beta}.
\end{equation*}
The lower bound follows from the fact that subtracting the same quantity (the difference between
$tp$ and $\tpl$) from both the numerator and denominator decreases the quotient, and that increasing
the denominator (by replacing $fp$ with $\fpu$) also decreases it.


Our goal now is to maximize the above lower bound efficiently. For simplicity we demonstrate this for $F_1$
but the details are essentially the same for $F_{\beta}$. First, we note that maximizing $\overline{F_1}$ is 
equivalent to minimizing $(\overline{F_1})^{-1}$, and write this objective as a fractional linear program \cite{boyd2004convex}:
\begin{equation*}\label{eq:f1frac}
\begin{split}
\min_{f,t,w} &\quad \frac{|Y^+| +\sum_{i \in Y^+} t_i +\sum_{i \in Y^-} f_i}{\sum_{\forall {i \in Y^+}	} t_i}\\
s.t.& \\
\forall {i \in Y^+}		&\quad t_i \le 1 ,\quad t_i \le w \cdot x_i \\
\forall {i \in Y^-}		&\quad f_i \ge 0,\quad f_i \ge 1 + w \cdot x_i.
\end{split}
\end{equation*}
We can now use the linear-fraction transformation \cite{boyd2004convex} to derive the equivalent problem in variables $\tau,\phi,\omega$ and $\epsilon$:
 \begin{equation}\label{eq:f1equiv}\begin{split}
 \min_{\phi,\tau,\omega,\epsilon} &\quad 
     |Y^+|\epsilon +\sum_{ {i \in Y^+}} \tau_i +\sum_{ {i \in Y^-}	} \phi_i\\
 s.t.& \\
\forall {i \in Y^+}		&\quad \tau_i \le \epsilon  ,\quad
 \tau_i \le \omega \cdot x_i \\
\forall {i \in Y^-}		&\quad \phi_i \ge 0 ,\quad \phi_i \ge \epsilon + \omega \cdot x_i \\
 &\quad \sum_{i\in Y^+} \tau_i = 1\\
 &\quad \epsilon \ge 0,
 \end{split}\end{equation}
where we used the mappings $\omega = \epsilon w$, $\phi_i = \epsilon f_i$, $\tau_i = \epsilon t_i$ and $\epsilon = \frac{1}{\sum_i t_i}$. The resulting linear program can be of course solved in various ways, e.g. using an iterative gradient ascent
procedure as with the \ratp and \aucpr\ objectives.

Alternatively, the task of maximizing the $F_\beta$ measure can be solved using a constrained optimization approach.
First we write the minimization of $(F_1)^{-1}$ task as a function of $\posloss,\negloss$:
 
 \begin{equation}\begin{split}
 \min (F_1)^{-1} =& \min_{f} \frac{(|Y^+| + |Y^+| -\posloss+\negloss)}{(|Y^+| -\posloss)}, \\
  \end{split}\end{equation}
which is equivalent to
\begin{equation}\begin{split}
\min_{f,\psi} 
 & \frac{(|Y^+| + \psi+\negloss)}{\psi}\\
 s.t. & \; \psi = |Y^+| -\posloss
\end{split}\end{equation}

after defining an auxiliary variable $\psi = (|Y^+| -\posloss)$. Some simple algebra shows that the above is equivalent to

 \begin{equation}\begin{split}
 \min_{f,\psi} \max_\lambda 
 \psi^{-1}\negloss  
 + \lambda \posloss
 + (\psi^{-1}-\lambda)|Y^+| 
 + \psi\lambda.
 \end{split}\end{equation} 
From this formulation we see again that given $\psi,\lambda$ we have a weighted classification problem as was proved by a different technique in \cite{parambath2014optimizing}.

\section{Experimental Evaluation}

In this section we demonstrate the merit of our approach for learning with a non-decomposable objective. We focus mostly on the optimization of the \aucpr objective due to its wide popularity in ranking scenarios and, as discussed, the fact that existing methods for optimizing this metric are not sufficiently scalable. With this in mind we consider three challenging problems, two of which are substantially (by orders of magnitude) larger than the those considered in the literature (e.g. \cite{song2015direct}).

\ignore{I wanted to introduce some empirical notes here
\begin{itemize}
\item dual step size 
\item augmented Lagrangian
\item releasing code ?
\item 
\end{itemize}
}

\subsection{CIFAR-10}
The CIFAR-10 dataset consists of 60000 32x32 color images in 10
classes, with 6000 images per class. The goal is to disinguish between the 10 classes. There are 50000 training images
and 10000 test images.  As our baseline we use the deep
convolutional network from \href{https://github.com/tensorflow/tensorflow/tree/r0.11/tensorflow/models/image/cifar10/}{TensorFlow.org}. Both the baseline model (trained with a soft-max loss) and our model (trained with the \aucpr loss) were optimized for 200,000 SGD steps with 128 images per batch. All the models  were trained on a single tesla k40 GPU for about eight hours.
Recall that our method relies on $K$ discrete anchor points to approximate the \aucpr integral. To evaluate the
robustness for this choice, we learned three models with 5, 10 and 20 points. Results were essentially identical for all of these settings, so for brevity only results with $K = 10$ are reported below. We also compared to the standard pairwise \aucroc surrogate \cite{rakotomamonjy2004optimizing}.

\ignore{
The results are summarized in the following table:
%
\begin{figure*}[!th]
\begin{center}
\begin{tabular}{||l | c | c | c | c | c |c||} \hline
\hbox{Objective vs. Metric} & \aucpr &	\patrb{70} &	\patrb{90} & 	\ratpa{70} & 	\ratpa{90} &	Accuracy  \\ \hline \hline
\aucpr  &	94.2\% &	95.3\% &	82.5\% &	93.5\% &	83.5\% &	87.2\%  \\ \hline
\patrb{70} &	94.2\% &	95.5\% &	82.9\% &	93.9\% &	83.4\% &	87.0\% \\ \hline
\patrb{90} &	94.3\% &	95.5\% &	82.9\% &	94.0\% &	83.8\% &	87.3\% \\ \hline
\ratpa{70} &	94.1\% &	95.3\% &	81.9\% &	93.8\% &	83.2\% &	86.5\% \\ \hline
\ratpa{90} &	94.4\% &	95.7\% &	83.5\% &	94.0\% &	84.3\% &	87.6\% \\ \hline
SoftMax &	84.6\% &	82.0\% &	60.0\% &	81.1\% &	54.5\% 	& 87.1\% \\ \hline
\aucroc	 &94.2\% &	95.2\% &	82.1\% &	94.0\% &	83.3\% &	87.0\%\\ \hline
\end{tabular}
\caption{This table contains CIFAR10 test-set performance of different algorithm with respect to different performance metrics. Rows contain performance of a loss function over various metrics. It is apparent that all ranking based metrics outperform the standard soft-max algorithm by 10-30\%. Although \ratpa{90} loss function achieves best performance in most metrics, comparing our various methods to \aucroc, or soft-max based optimization is always favorable for our objectives.}
\end{center}
\end{figure*}
}

The advantage of optimizing for the objective of interest is clear: optimizing for \aucpr rather than accuracy increases the \aucpr metric from 84.6\% to 94.2\%. Results for other metrics are presented below in \autoref{fig:gaintable}.

\begin{table}[!bh]
\centerline{
\begin{tabular}{||l | c | c ||} \hline
Metric \textbackslash Gain over &  \aucroc &  softmax  \\ 
\hline \aucpr      &  0.0\%&	9.6\%\\
\hline \patrb{70}  &  0.2\%& 13.5\%  \\
\hline \patrb{95}  &  0.8\%& 24.1\% \\
\hline \ratpa{70}  & -0.2\%& 12.7\% \\
\hline \ratpa{95}  &  1.9\%&  36.6\% \\
\hline
\end{tabular}
}
\caption{The gain over a baseline loss (in absolute percentage points) in various metrics when optimizing for that metric with our framework. The baseline losses considered are pairwise \aucroc and standard softmax cross-entropy. In all cases, the model architectures being optimized are identical.}
\label{fig:gaintable}
\end{table}

To get a more refined view of the differences between the baseline model and our approach, \autoref{fig:cifarcv} shows the aggregate precision-recall curve across all classes (left), and the per-class breakdown (right). The aggregate difference between the two models is evident and the advantage of our method in the interesting range of high precision is particularly impressive. Looking at the per-class performance, we see that our approach improves the \aucpr for \emph{all} 10 classes, and substantially so for classes where the performance of the baseline is poor.


\begin{figure*}[!th]
\begin{center}
\begin{tabular}{cc}
\hspace{-0.4in} 
 \includegraphics[width=0.95\columnwidth]{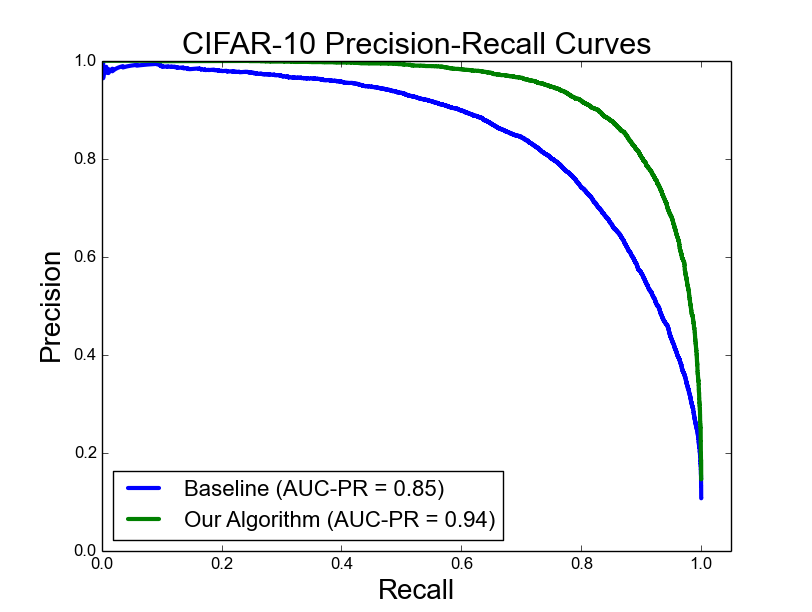} 
 \hspace{-0.5in} &
\includegraphics[width=0.95\columnwidth]{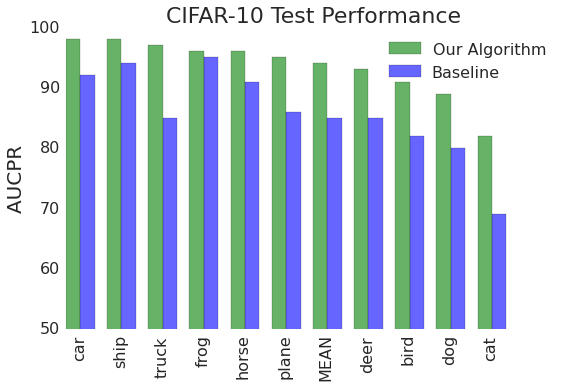}
\end{tabular}
\caption{Comparison of a baseline model trained to optimize accuracy
and a model trained to optimize \aucpr using our method on the CIFAR10 dataset. (left) Shows the aggregate precision-recall curve for all 10 classes. (right) Compares the \aucpr for each of the 10 classes.}
\label{fig:cifarcv}
\end{center}
\end{figure*}

\subsection{ImageNet}
The ILSVRC 2012 image dataset \cite{ILSVRC15} contains 1.2 million
images for training and 50K for validation. The goal is to distinguish
between 1000 object classes. This dataset, also known as ImageNet, is
used as the basis for competitions where the accuracy at the top 1 or
5 predictions is measured. We used the same dataset to demonstrate
that we can trade off accuracy and \aucpr\ at scale. We use the
Inception-v3 open source implementation from TensorFlow.org
\cite{abadi2016tensorflow} as our baseline. 
The ImageNet experiments were trained with 50 tesla K40 GPU replicas for three days performing about 5M mini-batch updates. Using the same
architecture, we optimize for \aucpr, allowing both the baseline and
our method the same training time. By using $K=5$ anchor 
points to approximate the \aucpr\ integral, we increase the \aucpr\ from
82.2\% for the baseline to 83.3\%, while decreasing accuracy by
0.4\%. We note that, generally speaking, improvements on the order of
1\% for ImageNet are considered substantial.

\subsection{JFT}
To demonstrate merit and applicability of our method on a truly 
large-scale domain, we consider the Google internal JFT dataset.
This dataset has over 300 million labeled images with about $20,000$ labels. As our baseline we use a deep convolutional neural network
which is based on the Inception architecture \cite{szegedy2015going}. 
The specific architecture used is Google's current state-of-the-art model. Performance of models on this data is evaluated first and foremost using the \aucpr\ metric, yet the baseline model is trained to maximize accuracy via a logistic loss function. To learn
a model using our approach, we start training from the pre-trained parameters (training from scratch is a multi-month process), and optimize the \aucpr\ objective for several days. To have a fair comparison,
we also allow the baseline model to continue training for the same
amount of time. While the baseline model achieves 
an \aucpr\ of 42\%, the model optimized with our surrogate achieves an \aucpr\
of 48\%, a substantial improvement.

\section{Summary and Future Directions}

In this work we addressed the challenge of scalable optimization non-decomposable ranking-based objective. We introduced simple building block bounds that provide a unified framework for efficient optimization of a wide range of such objectives. We demonstrated the empirical effectiveness of our approach on several real-life datasets.

Importantly, some of the problems we consider are dramatically larger than those previously considered in the literature. In fact, our approach is essentially as efficient as optimization of the fully decomposable accuracy loss. Indeed, our method can be coupled with any shallow (SVM, logistic regression) or deep (CNN) architecture with negligible cost on performance. 

Aside from the obvious appeal of scalability, our unified approach also opens the door for novel optimization of more refined objectives. For example, maximizing the area under the ROC curve for a pre-specified range of the false-positive rate is as easy as maximizing the area under the precision recall curve altogether. In future work, we plan to explore the importance of such flexibility for real-world ranking problems. \gnote{Want to mention random forests here?}

\bibliographystyle{plain}
\bibliography{lgo}

\end{document}